\DeclareMathOperator*{\argmin}{arg\,min}
\newcommand*\diff{\mathop{}\!\mathrm{d}}
\newcommand*\tran{^{\mkern-1.5mu\mathsf{T}}}
\newtheorem{lemma}{Lemma}[section]
\newtheorem{theorem}{Theorem}[section]
\title{A Fast Sampling Gradient Tree Boosting Framework}
\author{Daniel Chao Zhou \and Zhongming Jin \and Tong Zhang}
\date{}
\begin{document}

\maketitle

\begin{abstract}
	As an adaptive, interpretable, robust, and accurate meta-algorithm for arbitrary differentiable loss functions, gradient tree boosting is one of the most popular machine learning techniques, though the computational expensiveness severely limits its usage. Stochastic gradient boosting could be adopted to accelerates gradient boosting by uniformly sampling training instances, but its estimator could introduce a high variance. This situation arises motivation for us to optimize gradient tree boosting. We combine gradient tree boosting with importance sampling, which achieves better performance by reducing the stochastic variance. Furthermore, we use a regularizer to improve the diagonal approximation in the Newton step of gradient boosting. The theoretical analysis supports that our strategies achieve a linear convergence rate on logistic loss. Empirical results show that our algorithm achieves a 2.5x--18x acceleration on two different gradient boosting algorithms (LogitBoost and LambdaMART) without appreciable performance loss.
\end{abstract}

\section{Introduction}

\paragraph{} Gradient tree boosting \citep{Friedman2001} is a proven technique for both classification and regression problems. Comparing to other techniques, it achieves a higher accuracy while consuming more computation resources. Figure~\ref{fig:ranking} shows that LambdaMART (a gradient tree boosting algorithm) fulfils higher accuracy but costs much longer time among several machine-learned ranking algorithms.

\begin{figure*}
	\centering
	\begin{subfigure}[t]{0.49\textwidth}
		\includegraphics[width=\textwidth]{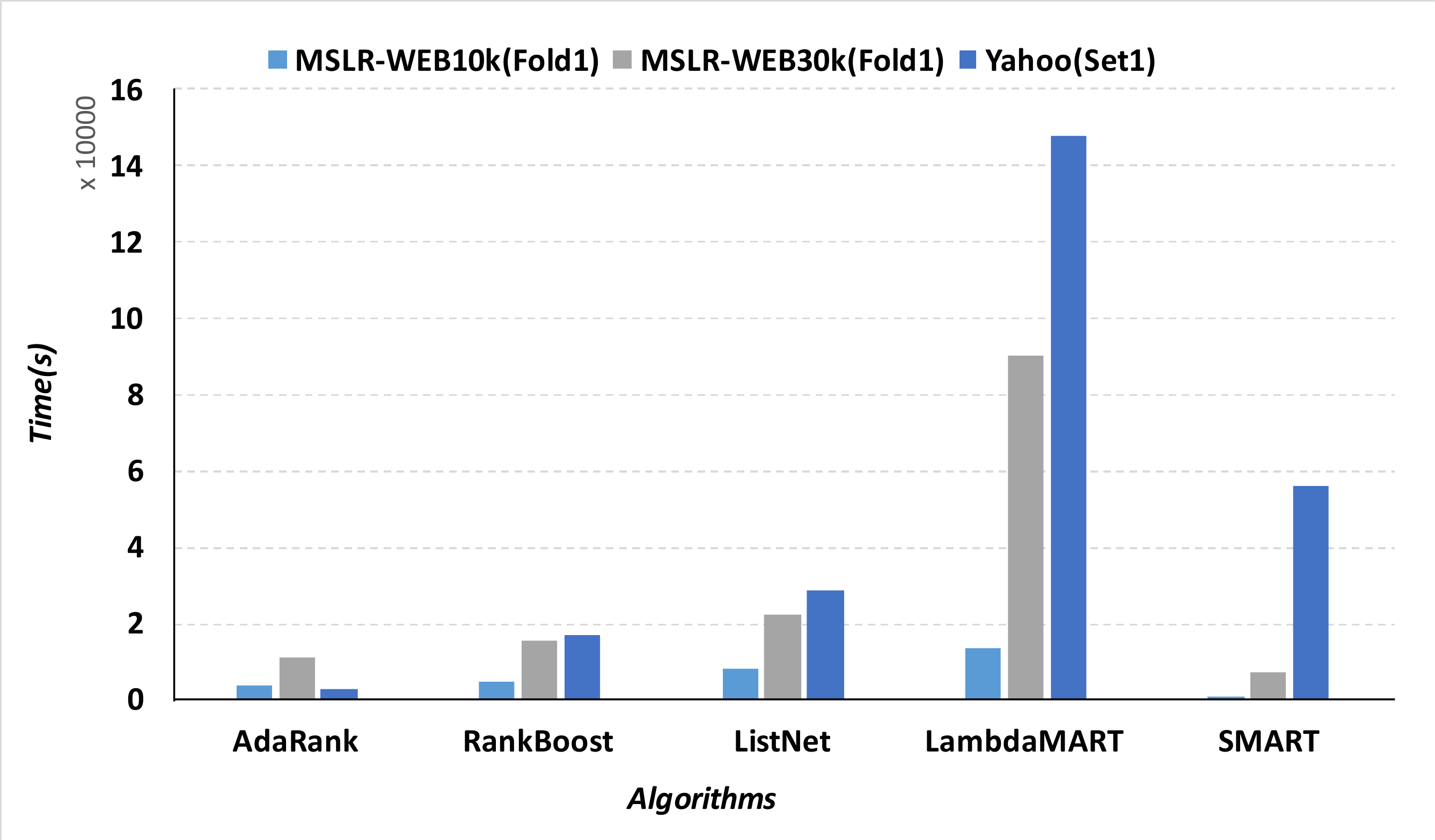}
		\caption{Efficiency}
	\end{subfigure}
	\begin{subfigure}[t]{0.49\textwidth}
		\includegraphics[width=\textwidth]{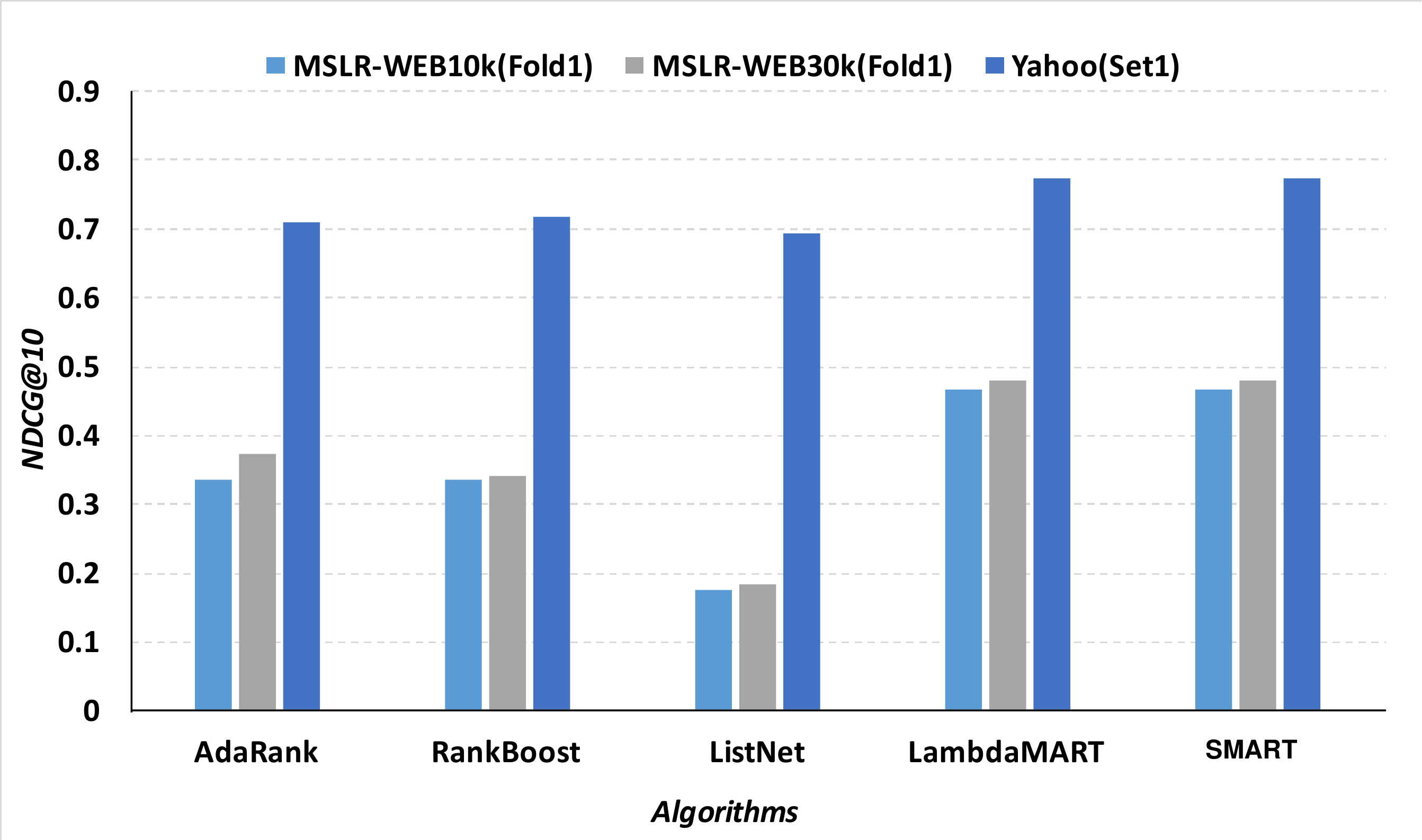}
		\caption{Accuracy (NDCG@10)}
	\end{subfigure}
	\caption{Performance of different ranking algorithms. Results of AdaRank, RankBoost, and ListNet are tested on RankLib \citep{ranklib}; LambdaMART is implemented by us according to \citep{Burges2010}.}\label{fig:ranking}
\end{figure*}

\paragraph{} Previous work such as \citep{Appel2013, Dubout2014, Johnson2014, Kalal2008, Needell2014} has been done to accelerate gradient boosting. In this paper, we propose a sampling framework named SMART (Sampling Multiple Additive Regression Tree) that accelerates gradient tree boosting through the following two methods.

\paragraph{Importance sampling} Stochastic gradient boosting \citep{Friedman2002} was extensively utilized to accelerate gradient boosting algorithms. This technique uniformly samples training instances, which guarantees the sampled dataset being an unbiased estimation of the original dataset. However, the variance of the stochastic gradient estimator could be high since the stochastic gradient varies significantly over different instances. As \citep{Zhao2015} demonstrated, the variance is minimized when sampling probability is roughly proportional to the norm of the gradient. In SMART, sampling probability is roughly proportional to the first/second-order gradient of training instances, which leads to satisfactory performance in our experiments.

\paragraph{Improved diagonal approximation} XGBoost \citep{Chen2016} uses the diagonal approximation on the Hessian in its Newton step, where non-diagonal elements in the Hessian matrix are abandoned. Such approximation has a remarkable negative impact on the algorithm. However, veritably computing the whole Hessian matrix and its inverse is computationally unfeasible. Inspired by \citep{Shamir2014}, we put a regularizer on the loss function, so the non-diagonal elements in the Hessian matrix are filled in an approximate manner.

\paragraph{} Details of the algorithms are listed in \S\ref{sec:algorithms}. To guarantee the effectiveness of the algorithms, theoretical and experimental support are given in \S\ref{sec:analysis} and \S\ref{sec:experiments}, respectively. We summarize our contribution as follows:

\begin{enumerate}
	\item We demonstrate a universal framework SMART, which is driven by \emph{importance sampling} and \emph{improved diagonal approximation}, that accelerates all gradient tree boosting algorithms;
	\item We proved that gradient tree boosting combined with importance sampling and logistic loss achieves a linear convergence rate;
	\item We implemented SMART upon LogitBoost and LambdaMART\@; our implementation achieves ideal efficiency and accuracy (2.5x--18x acceleration without appreciable accuracy loss) on multiple real-world datasets, which is superior to stochastic gradient boosting and weight trimming.
\end{enumerate}

\section{Related Work}

\subsection{Gradient Tree Boosting}

\paragraph{} Gradient tree boosting \citep{Friedman2001} produces an ensemble of regression trees as the prediction model. XGBoost \citep{Chen2016} improved the algorithm by introducing Newton's method in function space. It optimizes arbitrary second-order differentiable loss function by fit the regression tree to Newton's step. Details are listed in algorithm~\ref{alg:boosting}.

\begin{algorithm}
	\caption{Gradient tree boosting}\label{alg:boosting}
	\begin{algorithmic}
		\STATE{} $\bm y\gets\bm0$
		\FORALL{iteration $t$}
			\FORALL{instance $i$}
				\STATE $g_i\gets\partial\ell/\partial\hat y_i$
				\STATE $h_i\gets\partial^2\ell/\partial\hat y_i^2$
			\ENDFOR{}
			\STATE fit $f$ to the Newton's step $-\bm g/\bm h$ with diagonal approximation
			\STATE{} $\bm y\gets\bm y+\nu f(\bm X)$
		\ENDFOR{}
		\RETURN{} all decision trees
	\end{algorithmic}
\end{algorithm}




\subsection{LogitBoost}

\paragraph{} LogitBoost \citep{Friedman2000} solves the binary classification problem by applying the logistic loss function to gradient boosting, where the loss function is

$$\ell(\hat y_i)=y_i\ln\frac{1}{\psi_i}+(1-y_i)\ln\frac{1}{1-\psi_i},$$

and the binary label $y_i\in \{0,1\}$, $\psi_i$ is the probability of label being positive

$$\psi_i=\Pr(y_i=1)=\frac{e^{\hat y_i}}{e^{\hat y_i}+e^{-\hat y_i}}.$$

LogitBoost is the straightforward combination of algorithm~\ref{alg:boosting} and the first \& second gradient of the logistic loss

$$
\bm g=\nabla\ell(\bm{\hat y})=2(\bm\psi-\bm y),
$$

and

$$\bm h=\nabla^2\ell(\bm{\hat y})=4\langle\bm\psi,1-\bm\psi\rangle.
$$

\subsection{LambdaMART}

\paragraph{} Learning to rank is a task to train a ranking model with machine learning techniques, such that the model sorts new instances according to their relevance, preference, or importance. Previous work has shown that LambdaMART \citep{Burges2010} is an outperforming solution to the ranking problem, which championed the Yahoo! Learning to Rank Challenge \citep{Chapelle2011}. LambdaMART is the algorithm combining gradient tree boosting and the following loss function

$$
\ell(\bm s)=\sum_{\{i,j\}\in I}|\Delta\text{NDCG}_{ij}|\log\left(1+e^{-\sigma(s_i-s_j)}\right),
$$

where $\bm s$ is the model output, $I$ is the set of preference pairs from the training data, and NDCG is an indicator measuring the quality of a permutation. From the above loss function we have

\begin{align*}
	&g_i=\frac{\partial\ell(\bm s)}{\partial s_i}\\
&=
\sum_{j:\{i,j\}\in I}\frac{-\sigma|\Delta\text{NDCG}_{ij}|}{1+e^{\sigma(s_i-s_j)}}
+
\sum_{j:\{j,i\}\in I}\frac{\sigma|\Delta\text{NDCG}_{ij}|}{1+e^{\sigma(s_j-s_i)}},
\end{align*}

and

\begin{align*}
	&h_i=\frac{\partial^2\ell(\bm s)}{\partial s_i^2}
\\&=
\sum_{j:\{i,j\}\in I}
\frac{\sigma^2|\Delta\text{NDCG}_{ij}|e^{\sigma(s_i-s_j)}}{\left(1+e^{\sigma(s_i-s_j)}\right)^2}
\\
&+
\sum_{j:\{j,i\}\in I}\frac{\sigma^2|\Delta\text{NDCG}_{ij}|e^{\sigma(s_j-s_i)}}
{\left(1+e^{\sigma(s_j-s_i)}\right)^2}.
\end{align*}

LambdaMART is the straightforward combination of algorithm~\ref{alg:boosting} and the above $g_i$, $h_i$.

\subsection{Weight trimming}

\paragraph{} First introduced in \citep{Friedman2000}, weight trimming is an idea that correctly classified instances with high confidence, viz.\ instances with \emph{weight} less than a certain threshold, should be abandoned in each boosting iteration. Friedman observed that the computation time was dramatically reduced while accuracy was not sacrificed.


\subsection{Stochastic Gradient Boosting}

\paragraph{} \citep{Friedman2002} proposed that a tree should be fit on a random subsample of the training set in each boosting iteration. Instead of abandoning instances with small weights, stochastic gradient boosting samples training instances uniformly, v.i.z.\ that whether each instance being sampled are i.i.d.\ Bernoulli variables. Friedman observed that the accuracy of gradient boosting was substantially improved.



\section{Algorithms}
\label{sec:algorithms}

\paragraph{} SMART consists of two strategies, both of which accelerates arbitrary gradient tree boosting algorithms with a second-order differentiable loss function. The first strategy is the gradient boosting with Newton's method combined with importance sampling, where the importance is defined as the first gradient. The second strategy is the gradient boosting with Newton's method combined with importance sampling and improved diagonal approximation, where the importance is defined as the second gradient.

\subsection{First-gradient-proportional sampling}

\paragraph{} In the regression tree construction procedure, all instances are enumerated several times in each boosting step. It is reasonable to use a subsample of instances to construct each regression tree. In this algorithm, we introduce importance sampling into gradient boosting, which is superior to uniform sampling by achieving lower variance. In the same way as \citep{Friedman2002} did, we make independent decision to sample each instance on each iteration. Furthermore, we define Bernoulli variable $q_i^{(t)}\in \{0,1\}$ indicating whether $i^\text{th}$ instance is utilized on $t^\text{th}$ iteration, and $p_i^{(t)}=\Pr\left(q_i^{(t)}=1\right)$ being the sampling probability.

\paragraph{} Differing from \citep{Friedman2002}, $\{q_i\}_1^N$ are not i.i.d.\ Bernoulli variables. As \citep{Zhao2015} demonstrated, the variance of the gradient estimator is minimized when the sampling probability is roughly proportional to the norm of the gradient, such that $$p_i=\min(1,\rho|g_i|),$$ where $\rho$ is a tuning parameter.

\paragraph{} In order to eliminate the bias of the empirical risk introduced by the sampling algorithm, the regression tree is constructed upon the importance-balanced loss function of

$$\argmin_f\sum_{i=0}^n\frac{q_i}{p_i}\ell(f(\bm x_i)).$$

\paragraph{} First and second gradient is derived from this loss function such that $\tilde g_i=\frac{q_i}{p_i}g_i$ and $\tilde h_i=\frac{q_i}{p_i}h_i$. This algorithm is listed in algorithm~\ref{alg:boosting_i}.

\begin{algorithm}
	\caption{First-gradient-proportional sampling}\label{alg:boosting_i}
	\begin{algorithmic}
		\STATE $\bm y\gets\bm0$
		\FORALL{iteration $t$}
			\FORALL{instance $i\in \{1\dots n\}$}
				\STATE{$g_i\gets\partial\ell/\partial y_i$}
				\STATE{$h_i\gets\partial^2\ell/\partial y_i^2$}
				\STATE $p_i\gets\min(1,\rho|g_i|)$
				\STATE{generate value $q_i\in\{0,1\}$ by Bernoulli parameter $p_i$}
				\STATE $g_i\gets\frac{q_i}{p_i}g_i$
				\STATE $h_i\gets\frac{q_i}{p_i}h_i$
			\ENDFOR{}
			\STATE fit $f$ to the Newton's step $f(\bm x_i)=-g_i/h_i$ from the subset $\mathcal S=\{i|q_i=1\}$
			\STATE $\bm y\gets\bm y+\nu f(\bm X)$
		\ENDFOR{}
		\RETURN all decision trees
	\end{algorithmic}
\end{algorithm}




\subsection{Second-gradient-proportional sampling}

\paragraph{} XGBoost \citep{Chen2016} introduced Newton's method into gradient boosting, which accelerates the algorithm dramatically. However, non-diagonal elements in the Hessian matrix are abandoned in Newton's step. Such approximation gives a negative impact on the algorithm. \citep{Shamir2014} demonstrated that by solving the problem


\begin{equation}
	\label{eq:shamir}
	\argmin_f\sum_{i=1}^n
	\left(
		\ell(\tilde y_i)
		+\langle\ell'(\tilde y),f(\bm x_i)-\tilde y_i\rangle
		+\frac{1}{\eta}D(f(\bm x_i),\tilde y_i)
	\right)
\end{equation}

with Newton's method, we get a more accurate Newton's step, although the Hessian matrix is not entirely computed. In equation~\ref{eq:shamir}, the $\tilde y_i$ is a rough estimation of $y_i$. Specifically, it is the the hypothesis of last two iterations in our implementation. $\ell'(\tilde y)$ is defined as

$$
\ell'(\tilde y)=\frac{1}{|I_j|}\sum_{\imath\in I_j}\ell'(\tilde y_\imath),
$$

where $I_j$ is the instance set of the regression tree leaf $j$ which covers instance $i$. $D(\cdot,\cdot)$ is the Bregman divergence corresponding to the loss function $\ell(\cdot)$

$$
D(f(\bm x_i),\tilde y_i)=\ell(f(\bm x_i))-\ell(\tilde y_i)-\langle\ell'(\tilde y_i),f(\bm x_i)-\tilde y_i\rangle.
$$

Plugging $\ell'(\tilde y)$ and $D(f(\bm x_i),\tilde y_i)$ into~\ref{eq:shamir}, eliminating constant items, we have

$$
\argmin_f\left(
	\ell(f(\bm x_i))-\ell'(\tilde y_i)f(\bm x_i)
	+\frac{\eta f(\bm x_i)}{|I_j|}\sum_{\imath\in I_j}\ell'(\tilde y_\imath)
\right).
$$

Applying importance balancing to eliminate the bias, we have the final loss function

$$
L(f(\bm X))=\frac{q_i}{p_i}
\left(
	\ell(f(\bm x_i))-\ell'(\tilde y_i)f(\bm x_i)
	+\frac{\eta f(\bm x_i)}{|I_j|}\sum_{\imath\in I_j}\ell'(\tilde y_\imath)
\right).
$$

From this loss function, we have the first and second gradient for Newton's step

$$
\tilde g_i=\frac{q_i}{p_i}\left(
	g_i-\ell'(\tilde y_i)+\frac{\eta}{|I_j|}\sum_{\imath\in I_j}\ell'(\tilde y_\imath)
\right),
$$

and

$$
\tilde h_i=\frac{q_i}{p_i}h_i.
$$

In this algorithm, the sampling probability is roughly proportional to the second gradient such that $p_i=\min(1,\rho h_i)$.

\begin{algorithm}
	\caption{Second-gradient-proportional sampling}\label{alg:boosting_ii}
	\begin{algorithmic}
		\STATE $\bm y\gets\bm0$
		\FORALL{iteration $t$}
			\FORALL{instance $i\in \{1\dots n\}$}
				\STATE{$g_i\gets\partial\ell/\partial y_i$}
				\STATE{$h_i\gets\partial^2\ell/\partial y_i^2$}
				\STATE $p_i\gets\min(1,\rho h_i)$
				\STATE{generate value $q_i\in\{0,1\}$ by Bernoulli parameter $p_i$}
				\STATE $g_i\gets\frac{q_i}{p_i}\left(g_i-\ell'(\tilde y_i)+\eta\ell'(\tilde y)\right)$
				\STATE $h_i\gets\frac{q_i}{p_i}h_i$
			\ENDFOR{}
			\STATE fit $f$ to the Newton's step $f(\bm x_i)=-g_i/h_i$ from the subset $\mathcal S=\{i|q_i=1\}$
			\STATE $\bm y\gets\bm y+\nu f(\bm X)$
		\ENDFOR{}
		\RETURN all decision trees
	\end{algorithmic}
\end{algorithm}

\section{Analysis}
\label{sec:analysis}

\paragraph{} In this section, we provide a theoretical guarantee that our first-gradient-proportional subsampling achieves a linear convergence rate over the logistic loss. Before presenting the result, we give some definitions as follows.

\begin{enumerate}
	\item Let $\bm V\in\mathbb R^{N\times J}$ encoding the tree structure which transforms instances to leaf nodes, such that $v_{ij}=1$ if the $i^\text{th}$ instance sinks into the $j^\text{th}$ leaf, and $0$ otherwise.
	\item Let $\bm Q=\mathrm{diag}\left(\begin{smallmatrix}\frac{q_1}{p_1}&\cdots&\frac{q_N}{p_N}\end{smallmatrix}\right)$.
\end{enumerate}

Under such definitions, Newton's step of the legacy gradient tree boosting is

$$\bm f=-\nu\bm V(\bm V\tran\bm H\bm V)^{-1}\bm V\tran\bm g,$$

and of the first-gradient-proportional subsampling algorithm is

$$\bm f=-\nu\bm V(\bm V\tran\bm H\bm Q\bm V)^{-1}\bm V\tran\bm Q\bm g.$$



such that $\bm y^+=\bm y+\bm f$, where $\bm y^+$ is the prediction after another boosting iteration. The linear convergence rate is established upon the following two theorems.

\begin{theorem}
	\label{the:first0}
	Given a sequence $\epsilon_0>\epsilon_1>\cdots>\epsilon_T>0$ with the recurrence relation $\epsilon_{t+1}\le\gamma\epsilon_t$ where $\gamma\in(0,1)$ is a small constant, we have a linear convergence that
	$$\epsilon_T<\epsilon_0\gamma^T.$$
\end{theorem}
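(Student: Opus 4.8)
The plan is to treat Theorem~\ref{the:first0} as a purely elementary statement about a geometrically contracting sequence and to prove it by unrolling the recurrence $\epsilon_{t+1}\le\gamma\epsilon_t$. The cleanest route is a direct induction on the index $T$, which is equivalent to telescoping the chain of inequalities; no property of gradient boosting or of the logistic loss enters here, so this is really a self-contained arithmetic lemma that will later be instantiated with $\epsilon_t$ playing the role of the suboptimality at boosting iteration $t$. I would make that separation explicit, so the substantive work (deriving the contraction factor $\gamma$ from the Newton step and the sampling) is cleanly deferred and this statement is dispatched in a line or two.

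First I would establish the base case $T=1$, where the hypothesis gives immediately $\epsilon_1\le\gamma\epsilon_0=\gamma^1\epsilon_0$. For the inductive step, assuming $\epsilon_t\le\gamma^t\epsilon_0$, I would apply the recurrence once more to obtain
$$\epsilon_{t+1}\le\gamma\epsilon_t\le\gamma\cdot\gamma^t\epsilon_0=\gamma^{t+1}\epsilon_0,$$
which closes the induction and yields $\epsilon_T\le\gamma^T\epsilon_0$ for every $T$. Equivalently, one can simply compose the $T$ instances of the recurrence, $\epsilon_T\le\gamma\epsilon_{T-1}\le\gamma^2\epsilon_{T-2}\le\cdots\le\gamma^T\epsilon_0$, and read off the bound; I would present whichever of the two is more economical.

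The only genuine subtlety, and the point I expect to require care, is the \emph{strict} inequality in the conclusion. The recurrence $\epsilon_{t+1}\le\gamma\epsilon_t$ is non-strict, so unrolling it delivers only $\epsilon_T\le\gamma^T\epsilon_0$; indeed the pure geometric sequence $\epsilon_t=\gamma^t\epsilon_0$ satisfies every hypothesis (it is strictly decreasing since $\gamma\in(0,1)$, and meets the recurrence with equality) yet attains equality in the conclusion. To recover the strict bound $\epsilon_T<\gamma^T\epsilon_0$ I would assume that at least one of the $T$ telescoped steps is strict — the regime of interest, where the contraction is never exactly tight — and propagate that single strict link through the product. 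This bookkeeping of strict versus non-strict inequalities, rather than any nontrivial estimate, is the sole place where attention is needed; the geometric contraction itself is otherwise immediate from the hypothesis.
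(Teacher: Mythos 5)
Your induction (equivalently, telescoping the $T$ instances of the recurrence) is exactly the intended argument; the paper itself declares the theorem self-evident and omits the proof, so there is no alternative route to compare against. Your observation about the strict inequality is a genuine catch: since the hypothesis only gives $\epsilon_{t+1}\le\gamma\epsilon_t$, the sequence $\epsilon_t=\gamma^t\epsilon_0$ satisfies every stated assumption yet meets the conclusion with equality, so the theorem as written should conclude $\epsilon_T\le\epsilon_0\gamma^T$ (or else assume a strict contraction somewhere, as you propose). This is a defect in the statement rather than in your proof, and it is immaterial to how the theorem is used downstream, where only the geometric decay matters.
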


\begin{theorem}
	\label{the:first1}
	Given that
	$$\bm f=-\nu\bm V(\bm V\tran\bm H\bm Q\bm V)^{-1}\bm V\tran\bm Q\bm g,$$
	we have
	$$L\left(\bm y^+\right)=L(\bm y+\bm f)\le\gamma L(\bm y),$$
	where $0<\gamma<1$.
\end{theorem}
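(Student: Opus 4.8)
The plan is to prove the single-step multiplicative decrease by sandwiching the loss between a second-order upper model that the Newton step minimises and a Polyak--\L ojasiewicz-type lower bound. I first note that, since each logistic term $\ell(\hat y_i)$ depends only on its own coordinate $\hat y_i$, the Hessian $\bm H$ is genuinely diagonal (so the diagonal approximation is exact here) and its entries are uniformly bounded, $h_i=4\psi_i(1-\psi_i)\le1$. Boundedness of the logistic third derivative then supplies a quadratic upper model
$$L(\bm y+\bm f)\le L(\bm y)+\bm g\tran\bm f+\tfrac12\bm f\tran\bm B\bm f,$$
where $\bm B$ dominates the Hessian along $[\bm y,\bm y+\bm f]$ and stays close to $\bm H$ for a modest step size $\nu$.

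Second, I would substitute the step. With $\bm A=\bm V\tran\bm H\bm Q\bm V$ we have $\bm f=-\nu\bm V\bm A^{-1}\bm V\tran\bm Q\bm g$, so the linear term is $\bm g\tran\bm f=-\nu(\bm V\tran\bm g)\tran\bm A^{-1}(\bm V\tran\bm Q\bm g)$ and the quadratic term is $O(\nu^2)$. The sampling enters through $\mathbb E[q_i/p_i]=1$, i.e.\ $\mathbb E[\bm Q]=\bm I$: in expectation $\bm V\tran\bm Q\bm g\to\bm V\tran\bm g$ and $\bm A\to\bm V\tran\bm H\bm V$, so the expected improvement collapses to the deterministic Newton decrement $\lambda^2=(\bm V\tran\bm g)\tran(\bm V\tran\bm H\bm V)^{-1}(\bm V\tran\bm g)$. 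Choosing $\nu$ small enough that the negative linear term dominates the remainder yields an absolute decrease $\mathbb E[L(\bm y^+)]\le L(\bm y)-c\,\lambda^2$ with an explicit $c>0$.

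Third, to convert this into a fraction of $L(\bm y)$ I would establish a PL-type inequality $L(\bm y)\le C\lambda^2$. The per-instance version is exact and clean: writing $(1-\psi_i)/\psi_i=e^{-2\hat y_i}$ (taking $y_i=1$; the case $y_i=0$ is symmetric) one checks $g_i^2/h_i=e^{-2\hat y_i}$ while $\ell_i=\ln(1+e^{-2\hat y_i})$, so $\ln(1+x)\le x$ gives $\ell_i\le g_i^2/h_i$ for every instance with $C=1$, and hence $L(\bm y)\le\sum_i g_i^2/h_i$. Combining $\mathbb E[L(\bm y^+)]\le L(\bm y)-c\lambda^2$ with $\lambda^2\ge L(\bm y)/C$ gives $\mathbb E[L(\bm y^+)]\le(1-c/C)L(\bm y)$, i.e.\ $\gamma=1-c/C\in(0,1)$, which is precisely the recurrence feeding Theorem~\ref{the:first0}.

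The hard part will be two couplings that the clean per-instance picture hides. First, the leaf structure: $\bm f$ is forced piecewise constant on leaves, so the grouped decrement $\lambda^2=\sum_j(\sum_{i\in I_j}g_i)^2/(\sum_{i\in I_j}h_i)$ is only a Cauchy--Schwarz-reduced version of $\sum_i g_i^2/h_i$; establishing $\lambda^2\ge L(\bm y)/C$ therefore needs the tree to separate instances with opposing gradients, i.e.\ a non-degeneracy or expressiveness assumption on $\bm V$, since otherwise intra-leaf cancellation destroys the signal. Second, the randomness of $\bm Q$ sits inside the inverse $\bm A^{-1}$, so $\mathbb E[\bm A^{-1}]\ne(\mathbb E\bm A)^{-1}$ and linearity of expectation does not apply directly; I would control this by working leaf-by-leaf, where the step $f_j=-\nu\sum_{i\in I_j}(q_i/p_i)g_i\big/\sum_{i\in I_j}(q_i/p_i)h_i$ is an importance-weighted estimator of the full leaf step whose fluctuation is minimised exactly by the choice $p_i\propto|g_i|$ of \citep{Zhao2015}, so that the expected decrease tracks the deterministic analysis up to a controllable variance term.
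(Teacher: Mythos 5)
Your skeleton is essentially the paper's: a quadratic upper model from Taylor's theorem with a Hessian-domination constant (the paper's Lemma~\ref{lem:bound_first_0}, obtained from $\ell''(\xi)/\ell''(y)\le e^{2|\xi-y|}$ rather than a third-derivative bound), the substitution of the Newton step to expose the decrement, the pointwise PL-type inequality $\ell(y_i)\le g_i^2/h_i$ (exactly Lemma~\ref{lem:bound_first_2}), and --- importantly --- your observation that the leaf-grouped decrement can lose signal to intra-leaf cancellation and so requires an expressiveness assumption on the tree. That last point is precisely the paper's weak learnability assumption~(\ref{eq:bound_first_1_1}) inside Lemma~\ref{lem:bound_first_1}; you identified the right hypothesis without seeing it.

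Where you genuinely diverge is the treatment of the sampling, and this is also where your route has an unresolved gap. You aim for an in-expectation bound on the unweighted loss via $\mathbb E[\bm Q]=\bm I$, and you correctly flag that $\mathbb E[\bm A^{-1}]\ne(\mathbb E\bm A)^{-1}$ blocks a direct passage to the deterministic Newton decrement; your proposed fix (leaf-by-leaf variance control using $p_i\propto|g_i|$) is only a sketch and would need a concentration argument with explicit constants to close. The paper avoids this entirely: its $L$ is the importance-weighted loss $\sum_i(q_i/p_i)\ell(y_i)$, so $\nabla L=\bm Q\bm g$ and $\nabla^2L=\bm H\bm Q$, the matrix $\bm Q$ is treated as a fixed diagonal matrix conditional on the realized sample, and every inequality (including the weak learnability assumption, stated for the reweighted gradients $q_ig_i/p_i$) holds per realization --- no expectation is ever taken. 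If you restate your claim for the weighted loss conditional on $\bm Q$, your deterministic steps go through verbatim and the expectation-of-inverse obstacle disappears; as written, your version proves a different (in-expectation) statement and does not yet complete the key step. One smaller correction: your claim that the exact Hessian of the logistic objective is diagonal, while true coordinate-wise, is not the issue the leaf structure raises --- the relevant non-diagonality comes from constraining $\bm f$ to be constant on leaves, which is what forces the grouped quantity $\bm V\tran\bm H\bm Q\bm V$ and hence Lemma~\ref{lem:bound_first_1}.
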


Theorem~\ref{the:first0} is self-evident and its proof is omitted here. The proof for theorem~\ref{the:first1} is given in appendix~\ref{app:proof_first1}.

\section{Experiments}
\label{sec:experiments}

\paragraph{} We implement SMART upon two gradient tree boosting algorithms, LogitBoost and LambdaMART, each with two subsampling strategies. Multiple real-world datasets are employed to verify the efficiency of SMART.

\paragraph{} LogitBoost is tested on the LIBSVM/a8a datasets \citep{Chang2011}; LambdaMART is tested on three large-scale datasets: MSLR-WEB10K/MSLR-WEB30K of LETOR datasets \citep{Qin2013}, and Yahoo! Learning to Rank Challenge dataset \citep{Chapelle2011}. Principal characteristics of these datasets are given in table~\ref{tab:datasets}. Note that a8a is a binary classification dataset, while the others are ranking datasets.

\begin{table}
  \centering
  \caption{Properties of Datasets}
  \label{tab:datasets}
  \begin{tabular}{lrrrr}
    \toprule
    dataset & \#train & \#test & \#dim\\
    \midrule
    LIBSVM/a8a & 22696 & 9865 & 123\\
    MSLR-WEB10K/Fold1 & 719311 &  241521 & 136\\
    MSLR-WEB30K/Fold1 & 2258066 & 753611 & 136\\
    Yahoo/Set1 & 466687 & 165660 & 700\\
    \bottomrule
  \end{tabular}
\end{table}

\paragraph{} SMART is compared with 1) legacy LogitBoost or LambdaMART without any instance selecting strategies as the baseline, 2) weight-trimming, and 3) stochastic gradient boosting (uniform subsampling).

\paragraph{} Results are shown in figure~\ref{fig:a8a},~\ref{fig:mslr10k},~\ref{fig:mslr30k}, and~\ref{fig:yahoo}, from which we can tell that 1) second-gradient-proportional strategy utilizes the least training instances when it converges to its ideal accuracy; and 2) as boosting iteration proceeds, second-gradient-proportional strategy achieves much better accuracy than (or almost same accuracy with) the baseline.

\paragraph{} Each of table~\ref{tab:a8a},~\ref{tab:mslr10k},~\ref{tab:mslr30k},~\ref{tab:yahoo} consists of two parts. The upper part shows the performance details when each algorithm achieves a relatively ideal accuracy; the lower part shows the details when each algorithm achieves their best accuracy. Take MSLR-WEB10K for example, when it achieves an NDCG@10 at 0.453\footnote{The best NDCG@10 of MSLR-WEB10K achieved by RankLib is 0.4478.}, the second-gradient-proportional subsampling strategy takes 54 iterations and averagely utilizes 6\% training instances in each iteration; when it achieves an NDCG@10 at 0.465, the second-gradient-proportional subsampling strategy only takes 730 seconds, which achieves an 18x acceleration.

\paragraph{} In conclusion, our second-gradient-proportional subsampling strategy precisely portrays the characteristics of the original dataset while consumes the least sample size. Taking efficiency and accuracy into account, the second-gradient-proportional subsampling strategy is superior to any other algorithms in our experiments.

\begin{figure*}[ht]
  \begin{minipage}[b]{.5\linewidth}
    \includegraphics[width=\linewidth]{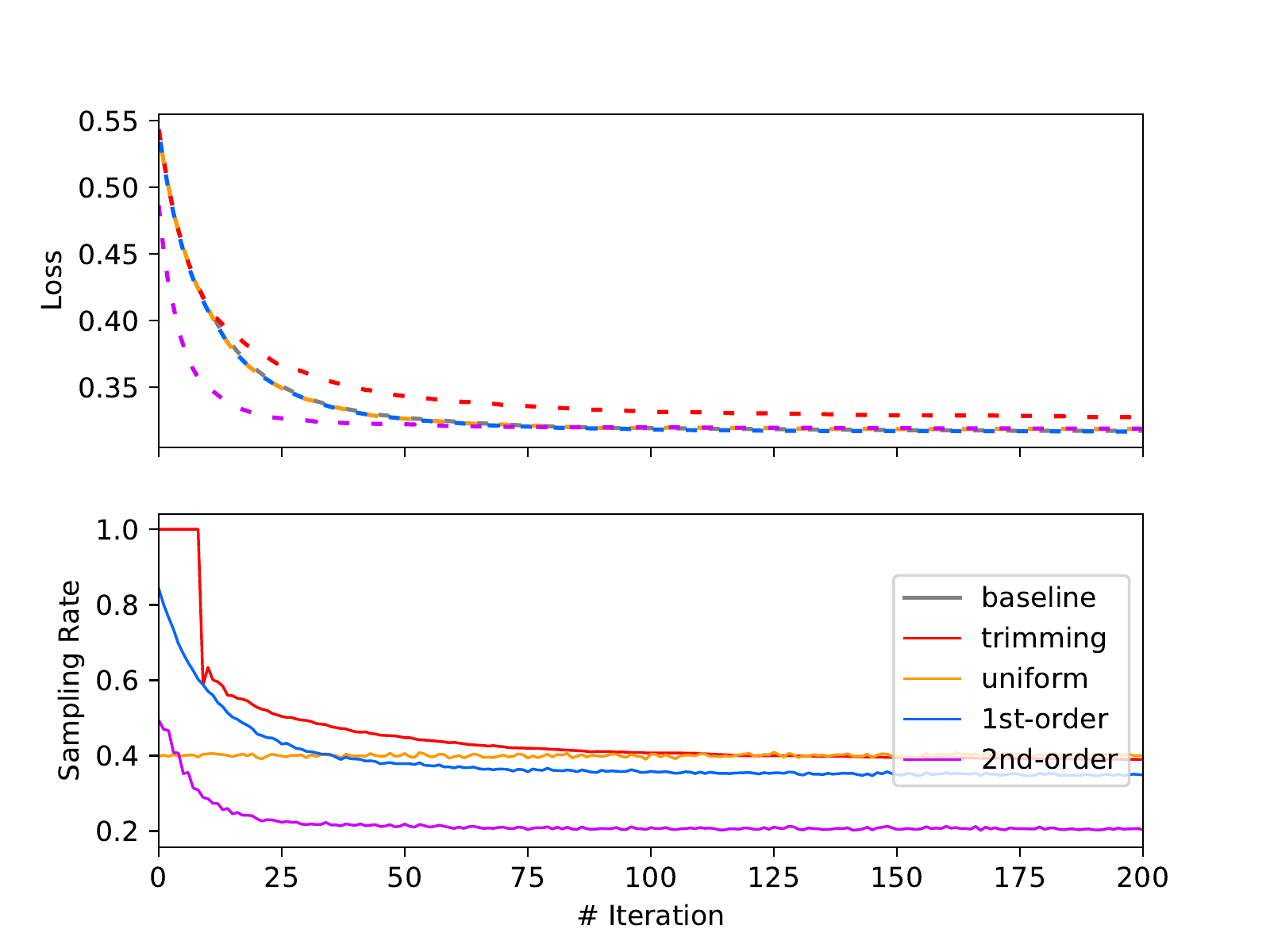}
    \captionof{figure}{Loss curve on a8a}
    \label{fig:a8a}
  \end{minipage}
  \begin{minipage}[b]{.5\linewidth}
    \centering
    \begin{tabular}{lrrrr}
      \toprule
      & loss & iter & ASRI\footnotemark & time/s \\
      \midrule
      baseline  & 0.325 &  60 & 1.0000 & 65 \\
      uniform   & 0.325 &  58 & 0.3998 & 16 \\
      trimming  & 0.325 & 443 & 0.4195 & 141 \\
      1st-order & 0.325 &  57 & 0.5028 & 15 \\
      2nd-order & 0.325 &  34 & 0.3052 & 6 \\
      \\
      baseline  & 0.319 &  97 & 1.0000 & 127\\
      uniform   & 0.319 &  96 & 0.3993 & 24 \\
      trimming  & 0.325 & 443 & 0.4195 & 141 \\
      1st-order & 0.319 &  89 & 0.4531 & 22 \\
      2nd-order & 0.319 & 142 & 0.2317 & 20 \\
      \bottomrule
    \end{tabular}
    \captionof{table}{Performance on a8a}
    \label{tab:a8a}
  \end{minipage}
\end{figure*}

\begin{figure*}[p]
  \begin{minipage}[b]{.5\linewidth}
    \includegraphics[width=\linewidth]{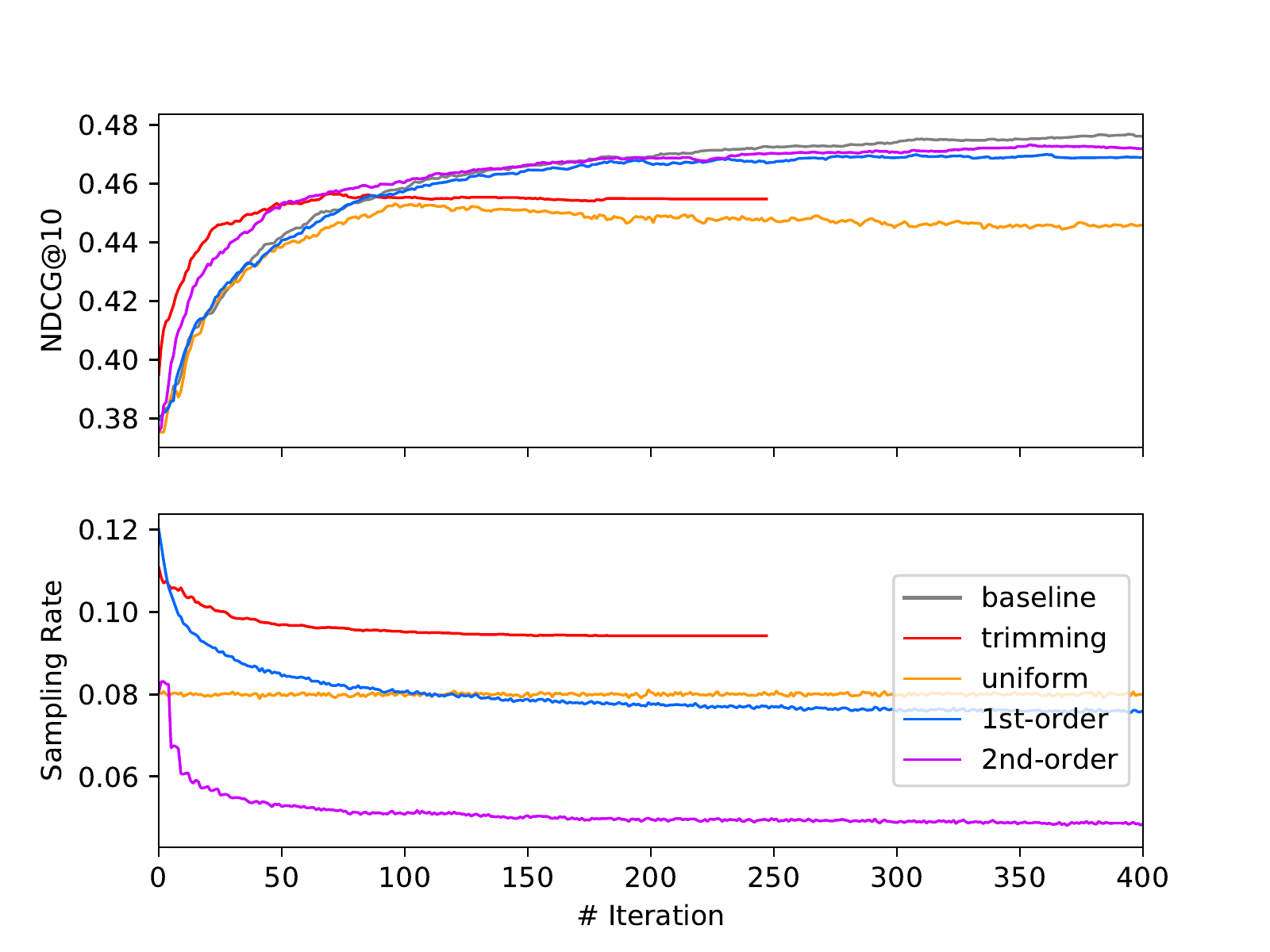}
    \caption{NDCG curve on MSLR-WEB10K}
    \label{fig:mslr10k}
  \end{minipage}
  \begin{minipage}[b]{.5\linewidth}
    \centering
    \begin{tabular}{lrrrr}
      \toprule
      & NDCG & iter & ASRI & time/s \\
      \midrule
      baseline  & 0.453 &  83 & 1.0000 &  6899 \\
      uniform   & 0.453 & 100 & 0.0799 &   779 \\
      trimming  & 0.453 &  52 & 0.1018 &   403 \\
      1st-order & 0.453 &  83 & 0.0902 &   665 \\
      2nd-order & 0.453 &  54 & 0.0604 &   273 \\
      \\
      baseline  & 0.465 & 141 & 1.0000 & 13190 \\
      uniform   & 0.453 & 100 & 0.0799 &   779 \\
      trimming  & 0.456 &  72 & 0.1004 &   571 \\
      1st-order & 0.465 & 162 & 0.0851 &  1349 \\
      2nd-order & 0.465 & 138 & 0.0549 &   730 \\
      \bottomrule
    \end{tabular}
    \caption{Performance on MSLR-WEB10K}
    \label{tab:mslr10k}
  \end{minipage}
\end{figure*}

\begin{figure*}[p]
  \begin{minipage}[b]{.5\linewidth}
    \includegraphics[width=\linewidth]{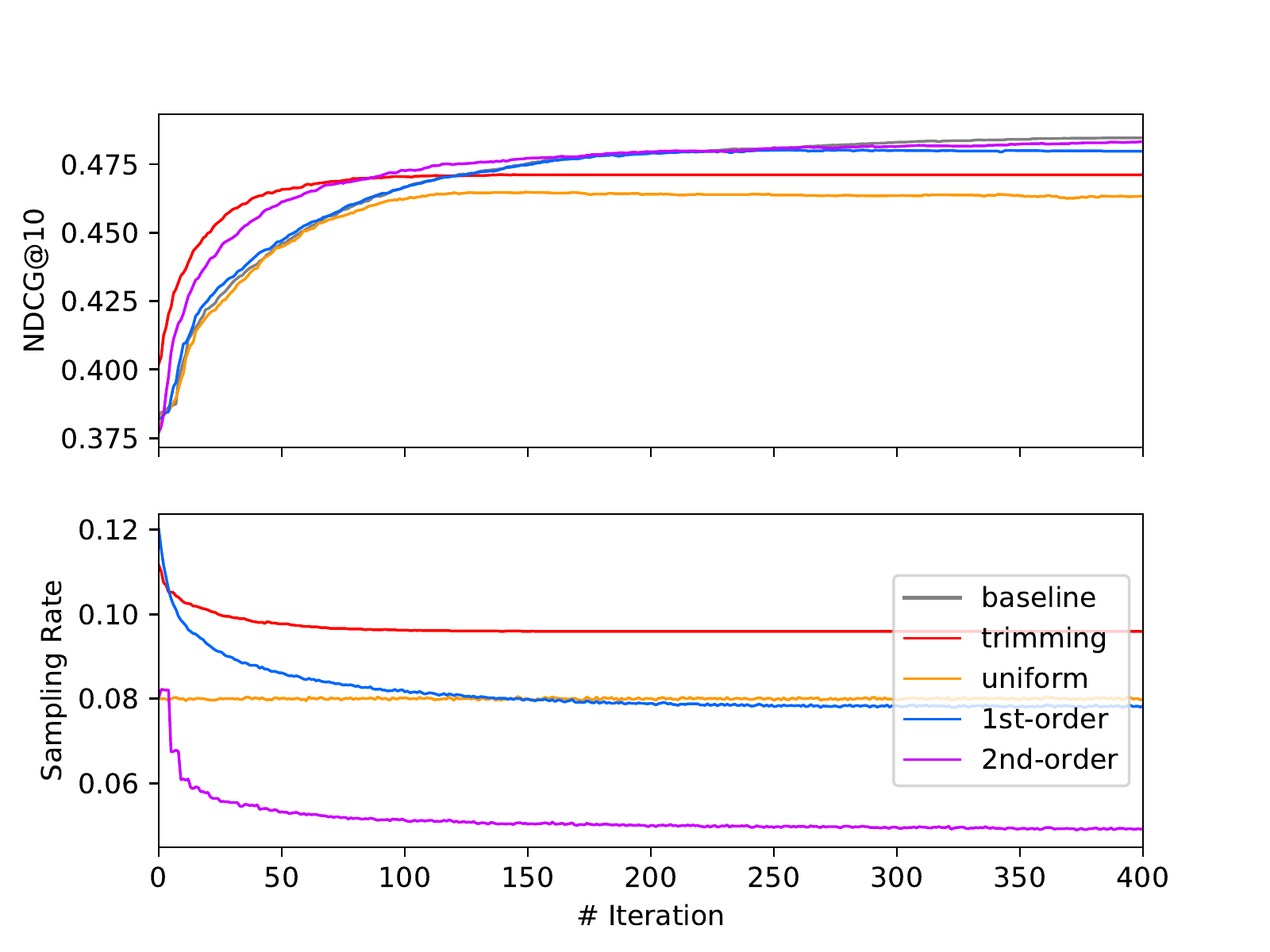}
    \caption{NDCG curve on MSLR-WEB30K}
    \label{fig:mslr30k}
  \end{minipage}
  \begin{minipage}[b]{.5\linewidth}
    \centering
    \begin{tabular}{lrrrr}
      \toprule
      & NDCG & iter & ASRI & time/s \\
      \midrule
      baseline  & 0.464 &  97 & 1.0000 & 33773 \\
      uniform   & 0.464 & 117 & 0.0800 &  6495 \\
      trimming  & 0.464 &  47 & 0.1021 &  2948 \\
      loss-based & 0.464 & 95 & 0.1468 &  7453 \\
      1st-order & 0.464 &  94 & 0.0901 &  5652 \\
      2nd-order & 0.464 &  63 & 0.0595 &  1654 \\
      \\
      baseline  & 0.480 & 228 & 1.0000 & 89898 \\
      uniform   & 0.464 & 117 & 0.0800 &  6495 \\
      trimming  & 0.471 & 139 & 0.0984 &  9405 \\
      1st-order & 0.480 & 247 & 0.0836 & 16234 \\
      2nd-order & 0.480 & 236 & 0.0530 &  6827 \\
      \bottomrule
    \end{tabular}
    \caption{Performance on MSLR-WEB30K}
    \label{tab:mslr30k}
  \end{minipage}
\end{figure*}

\begin{figure*}[p]
  \begin{minipage}[b]{.5\linewidth}
    \includegraphics[width=\linewidth]{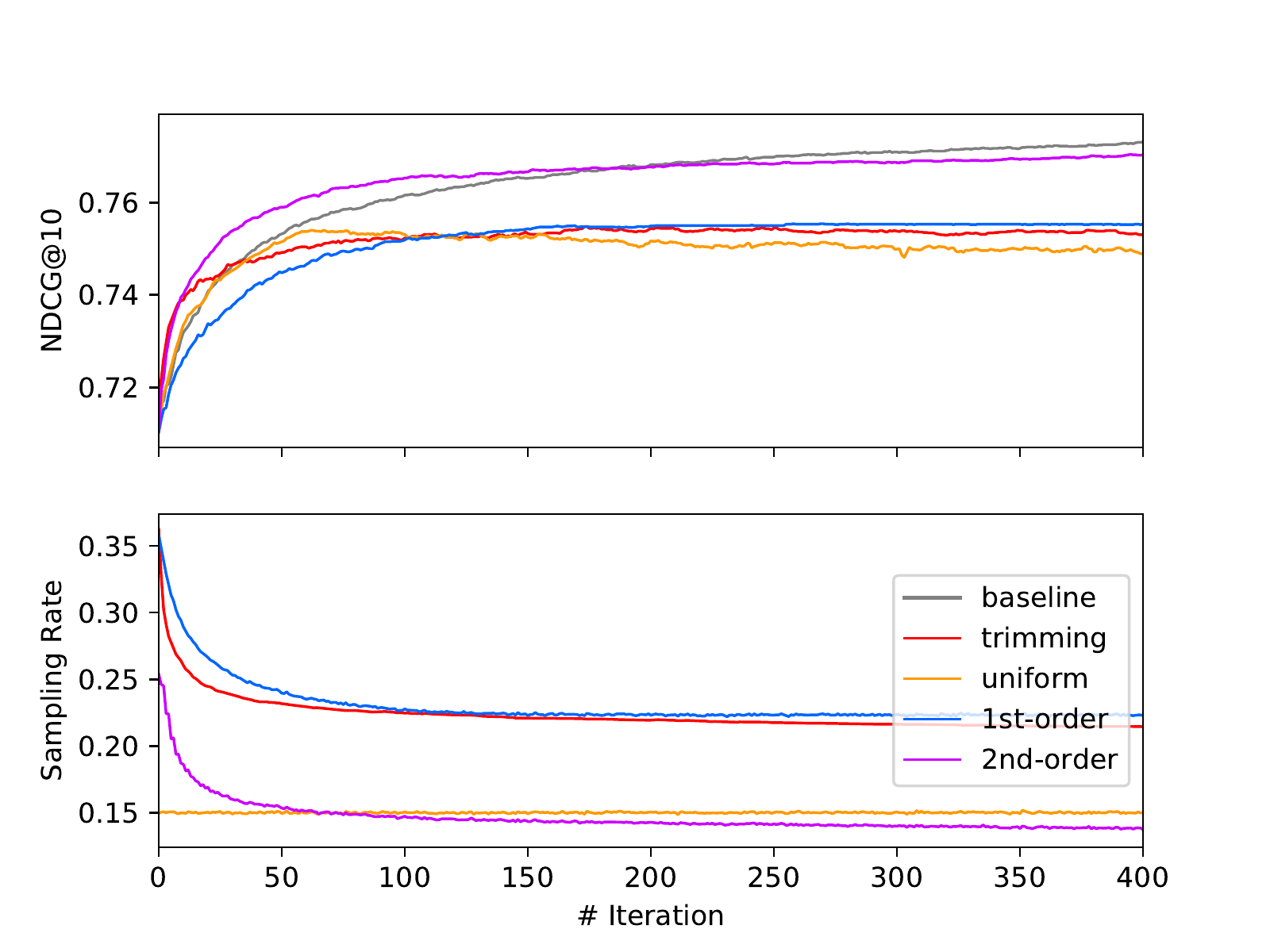}
    \caption{NDCG curve on Yahoo! dataset}
    \label{fig:yahoo}
  \end{minipage}
  \begin{minipage}[b]{.5\linewidth}
    \centering
    \begin{tabular}{lrrrr}
      \toprule
      & NDCG & iter & ASRI & time/s \\
      \midrule
      baseline  & 0.754 &  57 & 1.0000 & 18200 \\
      uniform   & 0.754 &  66 & 0.1500 &  2623 \\
      trimming  & 0.754 & 169 & 0.2352 & 10156 \\
      1st-order & 0.754 & 147 & 0.2455 & 11389 \\
      2nd-order & 0.754 &  35 & 0.1804 &  1769 \\
      \\
      baseline  & 0.772 & 357 & 1.0000 & 147872 \\
      uniform   & 0.754 &  66 & 0.1500 &   2623 \\
      trimming  & 0.755 & 632 & 0.2201 &  35355 \\
      1st-order & 0.755 & 173 & 0.2423 &  14118 \\
      2nd-order & 0.772 & 645 & 0.1430 &  55651 \\
      \bottomrule
    \end{tabular}
    \caption{Performance on Yahoo! dataset}
    \label{tab:yahoo}
  \end{minipage}
\end{figure*}

\footnotetext{ASRI\@: average subsampling rate per iteration}

\section{Conclusion}

\paragraph{} In this paper, we presented SMART, an efficient subsampling framework based on gradient tree boosting. Inspired by \citep{Zhao2015}, we introduced importance sampling such that subsampling probability is roughly proportional to the gradient. Inspired by \citep{Shamir2014}, we fill the Hessian matrix in an approximate manner. Theoretically, we proved that SMART achieves a linear convergence rate on logistic loss, which is same as the legacy LogitBoost. Practically, we implemented SMART upon LogitBoost and LambdaMART\@. Experiments show that our implementation achieves a 2.5x--18x acceleration under real-world, large-scale datasets.

\paragraph{} Furthermore, SMART could be applied on more gradient boosting algorithms, such as AdaBoost, for further guarantee.

\appendix

\section{Proof of theorem~\ref{the:first1}}
\label{app:proof_first1}

Following auxiliary lemmas are needed to prove theorem~\ref{the:first1}.

\begin{lemma}
  \label{lem:bound_first_0}
  $(\bm V\tran\bm H_{\bm\xi}\bm Q\bm V)(\bm V\tran\bm H\bm Q\bm V)^{-1}\le\mu\bm I$, where $\bm H_{\bm\xi}=\mathrm{diag}(\begin{smallmatrix}\ell''(\xi_1)&\dots&\ell''(\xi_N)\end{smallmatrix})$.
\end{lemma}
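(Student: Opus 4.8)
The plan is to exploit the very special structure of the tree-assignment matrix $\bm V$: each instance falls into exactly one leaf, so every row of $\bm V$ is a one-hot vector and the columns have disjoint supports. Consequently, for any diagonal matrix $\bm D$ the product $\bm V\tran\bm D\bm V$ is itself diagonal, with $(j,j)$ entry $\sum_{i\in I_j}d_i$, where $I_j$ is the instance set of leaf $j$. Applying this to $\bm D=\bm H_{\bm\xi}\bm Q$ and to $\bm D=\bm H\bm Q$ (both products of diagonal matrices, hence diagonal) shows that $\bm V\tran\bm H_{\bm\xi}\bm Q\bm V$ and $\bm V\tran\bm H\bm Q\bm V$ are both diagonal. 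Since the logistic $\ell''>0$, the latter has strictly positive diagonal on any leaf containing a sampled instance, so its inverse exists and the product in the lemma is diagonal, hence symmetric; the L\"owner inequality $\cdots\le\mu\bm I$ therefore collapses to the scalar requirement that each diagonal entry be at most $\mu$.

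Writing $w_i=q_i/p_i\ge0$, the $j$-th diagonal entry is
$$\frac{\sum_{i\in I_j}\ell''(\xi_i)\,w_i}{\sum_{i\in I_j}\ell''(\hat y_i)\,w_i}=\frac{\sum_{i\in I_j}\bigl(\ell''(\xi_i)/\ell''(\hat y_i)\bigr)\,\ell''(\hat y_i)w_i}{\sum_{i\in I_j}\ell''(\hat y_i)w_i},$$
which I would read as a convex combination of the per-instance ratios $r_i:=\ell''(\xi_i)/\ell''(\hat y_i)$ with nonnegative weights $\ell''(\hat y_i)w_i$. A convex combination of numbers bounded by $\mu$ is itself bounded by $\mu$, so it suffices to prove $r_i\le\mu$ for every instance.

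The core of the argument, and the step I expect to be the main obstacle, is bounding $r_i$. The points $\xi_i$ arise from the Lagrange form of the second-order Taylor remainder of $\ell$ between $\hat y_i$ and $\hat y_i+f_i$, so $\xi_i$ lies between them and $|\xi_i-\hat y_i|\le|f_i|$. For the logistic loss one has $\ell''(z)=4\psi(z)(1-\psi(z))$ with $\psi(z)\in(0,1)$, whence $g(z):=\log\ell''(z)$ satisfies $g'(z)=2-4\psi(z)$ and $|g'(z)|\le2$ everywhere. This self-concordance-type bound gives $r_i=\exp\bigl(g(\xi_i)-g(\hat y_i)\bigr)\le e^{2|\xi_i-\hat y_i|}\le e^{2|f_i|}$, so that $\mu=\exp\bigl(2\max_i|f_i|\bigr)$ would suffice.

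The remaining difficulty is to certify that $\max_i|f_i|$ is finite. Since the leaf update is $f_j=-\nu(\sum_{i\in I_j}w_ig_i)/(\sum_{i\in I_j}w_ih_i)$ with $|g_i|\le2$ for the logistic loss, I would bound $|f_j|\le2\nu\sum_{i\in I_j}w_i/\sum_{i\in I_j}w_ih_i$, which is controlled whenever the per-leaf Hessian mass is bounded below, i.e.\ whenever the iterates stay in a bounded region ($\psi_i$ bounded away from $0$ and $1$). Under that standing assumption $\mu$ is finite and the lemma follows; tracking this bound carefully, so that it later feeds into a uniform $\gamma<1$ in Theorem~\ref{the:first1}, is where the real care is needed.
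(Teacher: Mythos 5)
Your proof is correct and follows essentially the same route as the paper: the paper's entire argument is the pointwise ratio bound $\ell''(\xi)/\ell''(y)\le e^{2|\xi-y|}$ (cited from \citep{Sun2014}, lemma 9, which you rederive via $|(\log\ell'')'|=|2-4\psi|\le2$) followed by aggregation over each leaf, with $\mu=\exp(2\max_i|\xi_i-y_i|)$. The only difference is that you make explicit the steps the paper compresses --- that $\bm V\tran\bm D\bm V$ is diagonal and that the per-leaf ratio is a convex combination of the per-instance ratios --- and you additionally discuss why $\mu$ is finite, which the paper does not address.
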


\begin{proof}
  \citep{Sun2014}'s lemma 9 proved that
  $$\frac{\ell''(\xi)}{\ell''(y)}\le e^{2|\xi-y|},$$
  which gives
  $$\sum_{i\in I_j}\frac{q_i\ell''(\xi_i)}{p_i}\le\sum_{i\in I_j}\frac{q_i\ell''(y_i) e^{2|\xi_i-y_i|}}{p_i},$$
  which is equivalent to
  $$(\bm V\tran\bm H_{\bm\xi}\bm Q\bm V)(\bm V\tran\bm H\bm Q\bm V)^{-1}\le\exp(\max_i|\xi_i-y_i|)\bm I.$$
\end{proof}

\begin{lemma}
  \label{lem:bound_first_1}
  $\bm g\tran\bm Q\bm V(\bm V\tran\bm H\bm Q\bm V)^{-1}\bm V\tran\bm Q\bm g\le\gamma^2\bm g\tran\bm Q(\bm H\bm Q)^{-1}\bm Q\bm g$.
\end{lemma}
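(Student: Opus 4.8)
The plan is to recognize the left-hand side as an orthogonal projection of the reweighted gradient in a suitable inner product, so that the asserted bound becomes the statement that a projection cannot increase norm. Since $\bm H$ and $\bm Q$ are diagonal and positive semidefinite, set $\bm A=\bm H\bm Q$ (diagonal, with $A_{ii}=q_ih_i/p_i\ge0$), and interpret every inverse on the sampled support $\mathcal S=\{i\mid q_i=1\}$ so that leaves or coordinates carrying no sampled instance are simply dropped (equivalently, use a pseudoinverse). Writing $\bm u=\bm Q\bm g$, the claim reduces to the quadratic-form inequality $\bm u\tran\bm V(\bm V\tran\bm A\bm V)^{-1}\bm V\tran\bm u\le\gamma^2\,\bm u\tran\bm A^{-1}\bm u$.

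First I would conjugate by $\bm A^{1/2}$. Putting $\bm W=\bm A^{1/2}\bm V$, one computes $\bm V(\bm V\tran\bm A\bm V)^{-1}\bm V\tran=\bm A^{-1/2}\bm P\bm A^{-1/2}$ with $\bm P=\bm W(\bm W\tran\bm W)^{-1}\bm W\tran$, the orthogonal projector onto $\operatorname{range}(\bm W)$. Setting $\bm v=\bm A^{-1/2}\bm u$, the left-hand side is $\|\bm P\bm v\|^2$ and the right-hand base form is $\|\bm v\|^2$, so the inequality is exactly $\|\bm P\bm v\|^2\le\gamma^2\|\bm v\|^2$. Because $\bm P$ is an orthogonal projector its eigenvalues lie in $\{0,1\}$, whence $\|\bm P\bm v\|^2\le\|\bm v\|^2$ unconditionally, delivering the bound with $\gamma=1$.

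For concrete bookkeeping I would also record the equivalent leaf-wise route. Since $\bm V$ encodes a disjoint leaf partition, both $\bm V\tran\bm A\bm V$ and the aggregated gradient $\bm V\tran\bm Q\bm g$ are leaf-wise sums, so the left-hand side equals $\sum_j\bigl(\sum_{i\in I_j}q_ig_i/p_i\bigr)^2\big/\bigl(\sum_{i\in I_j}q_ih_i/p_i\bigr)$. A per-leaf application of the Cauchy--Schwarz (Engel/Titu) inequality bounds each summand by $\sum_{i\in I_j}q_ig_i^2/(p_ih_i)$, and summing over leaves reproduces the right-hand base $\sum_iq_ig_i^2/(p_ih_i)=\bm g\tran\bm Q(\bm H\bm Q)^{-1}\bm Q\bm g$. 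Both routes yield the same tight constant $1$.

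The hard part is obtaining the strict factor $\gamma<1$ rather than $\gamma\le1$. The projector bound is tight precisely when $\bm v=\bm A^{-1/2}\bm Q\bm g$ lies in $\operatorname{range}(\bm W)=\operatorname{range}(\bm A^{1/2}\bm V)$, that is, when the Newton ratios $g_i/h_i$ are constant within every leaf and the tree reproduces the step with no residual; equivalently, the per-leaf Cauchy--Schwarz is tight only in that degenerate case. I would therefore define $\gamma^2$ as the data-dependent ratio $\|\bm P\bm v\|^2/\|\bm v\|^2$ and argue it stays bounded away from $1$ whenever some leaf genuinely aggregates instances with differing $g_i/h_i$, quantifying the gap through the smallest within-leaf dispersion of these ratios. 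Establishing this strict spectral gap is the crux, and it is this factor (together with the curvature control of Lemma~\ref{lem:bound_first_0} applied to the quadratic remainder) that ultimately produces the contraction $L(\bm y^+)\le\gamma L(\bm y)$ in Theorem~\ref{the:first1}.
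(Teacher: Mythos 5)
There is a genuine gap, and it is one of direction. Your projection/Cauchy--Schwarz argument establishes
$$\bm g\tran\bm Q\bm V(\bm V\tran\bm H\bm Q\bm V)^{-1}\bm V\tran\bm Q\bm g\le\bm g\tran\bm Q(\bm H\bm Q)^{-1}\bm Q\bm g,$$
i.e.\ the upper bound with $\gamma=1$, and your closing paragraph tries to sharpen this to $\gamma<1$. But that is not the inequality the paper proves or needs. In the proof of Theorem~\ref{the:first1} the lemma is applied to the term $\left(\frac{\mu\nu^2}{2}-\nu\right)\bm g\tran\bm Q\bm V(\bm V\tran\bm H\bm Q\bm V)^{-1}\bm V\tran\bm Q\bm g$, whose coefficient is \emph{negative} for the relevant $\nu$; to turn this into a contraction of $L$ one needs a \emph{lower} bound on the projected Newton decrement, namely
$$\bm g\tran\bm Q\bm V(\bm V\tran\bm H\bm Q\bm V)^{-1}\bm V\tran\bm Q\bm g\ge\gamma^2\,\bm g\tran\bm Q(\bm H\bm Q)^{-1}\bm Q\bm g$$
with $\gamma^2>0$. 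That is exactly what the appendix derives, leaf by leaf: using the logistic-loss identities $g^2=4\phi^2$, $h=4\phi(1-\phi)$, $g^2/h=\phi/(1-\phi)$ together with the clipping bound $1/(1-\phi_i)\le1/\rho$, it shows
$$\left(\sum_{i\in I_j}\frac{q_ih_i}{p_i}\right)\left(\sum_{i\in I_j}\frac{q_ig_i^2}{p_ih_i}\right)\le\frac{n_j}{\rho}\sum_{i\in I_j}\left(\frac{q_ig_i}{p_i}\right)^2,$$
and then invokes a weak learnability assumption $\left(\sum_{i\in I_j}\frac{q_ig_i}{p_i}\right)^2\ge4\delta^2\sum_{i\in I_j}\left(\frac{q_ig_i}{p_i}\right)^2$ to conclude the per-leaf lower bound with $\gamma^2=4\delta^2\rho/n_j$. (The $\le$ in the lemma as printed is evidently a sign slip: the displayed conclusion of the paper's own proof is a $\ge$, and only that direction makes Theorem~\ref{the:first1} go through.)

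In your language, the quantity that matters is not $\|\bm P\bm v\|^2\le\|\bm v\|^2$ but $\|\bm P\bm v\|^2\ge\gamma^2\|\bm v\|^2$: the projection onto the tree's leaf space must \emph{retain} a constant fraction of the energy of $\bm A^{-1/2}\bm Q\bm g$. This cannot be proved unconditionally --- if the reweighted gradients within a leaf cancel, the left-hand side vanishes while the right-hand side does not --- which is precisely why the paper must impose weak learnability and exploit properties specific to the logistic loss. Your proposal contains no ingredient that could supply this lower bound (you even identify the degenerate tight case but then aim to bound the ratio away from $1$ rather than away from $0$), so as written it proves only the trivial direction and would not support the convergence theorem.
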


\begin{proof}
  Recall that $\ell(y)=-r\ln \psi-(1-r)\ln(1-\psi)$, and $\psi=\frac{e^{y}}{e^{y}+e^{-y}}$, which yield $\frac{\diff\psi}{\diff y}=2\psi(1-\psi)$, $g=\ell'(y)=2(\psi-r)$, and $h=\ell''(y)=4\psi(1-\psi)$. Defining $\phi=|r-\psi|$, we have $g^2=4\phi^2$, $h=4\phi(1-\phi)$, and $\frac{g^2}{h}=\frac{\phi}{1-\phi}$. The $\psi_i$ clapping yields $\frac{1}{1-\phi_i}\le\frac{1}{\rho}$, and $1-\phi_i\le1$. We have
  \begin{align*}
    &\left(\sum_{i\in I_j}\frac{q_ih_i}{p_i}\right)\left(\sum_{i\in I_j}\frac{q_ig_i^2}{p_ih_i}\right)\\
    =&\left(\sum_{i\in I_j}\frac{4q_i\phi_i(1-\phi_i)}{p_i}\right)\left(\sum_{i\in I_j}\frac{q_i\phi_i}{p_i(1-\phi_i)}\right)\\
    \le&\frac{4}{\rho}\left(\sum_{i\in I_j}\frac{q_i\phi_i}{p_i}\right)\left(\sum_{i\in I_j}\frac{q_i\phi_i}{p_i}\right)\\
    \le&\frac{4n_j}{\rho}\sum_{i\in I_j}\left(\frac{q_i\phi_i}{p_i}\right)^2\\
    =&\frac{n_j}{\rho}\sum_{i\in I_j}\left(\frac{q_ig_i}{p_i}\right)^2.
    \addtocounter{equation}{1}\tag{\theequation}\label{eq:bound_first_1_0}
  \end{align*}
  We perform a weak learnability assumption here such that for some $\delta>0$ we have
  \begin{equation}
    \label{eq:bound_first_1_1}
    \left(\sum_{i\in I_j}\frac{q_ig_i}{p_i}\right)^2\ge4\delta^2\sum_{i\in I_j}\left(\frac{q_ig_i}{p_i}\right)^2.
  \end{equation}
  Combining (\ref{eq:bound_first_1_0}) and (\ref{eq:bound_first_1_1}), it yields
  $$\left(\sum_{i\in I_j}\frac{q_ig_i}{p_i}\right)^2\left(\sum_{i\in I_j}\frac{q_ih_i}{p_i}\right)^{-1}\ge\frac{4\delta^2\rho}{n_j}\left(\sum_{i\in I_j}\frac{q_ig_i^2}{p_ih_i}\right),$$
  which completes this proof.
\end{proof}

\begin{lemma}
  \label{lem:bound_first_2}
  $\bm g\tran\bm Q(\bm H\bm Q)^{-1}\bm Q\bm g\le\sum\frac{q_i}{p_i}\ell(y_i)$.
\end{lemma}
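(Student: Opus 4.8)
The plan is to exploit the fact that every matrix in the quadratic form is diagonal, reducing the statement to a termwise scalar inequality. First I would note that $\bm Q=\mathrm{diag}(q_i/p_i)$ and $\bm H=\mathrm{diag}(h_i)$ commute, so $\bm Q(\bm H\bm Q)^{-1}\bm Q=\mathrm{diag}(q_i/(p_ih_i))$ and hence
$$\bm g\tran\bm Q(\bm H\bm Q)^{-1}\bm Q\bm g=\sum_i\frac{q_i}{p_i}\frac{g_i^2}{h_i},$$
while the right-hand side is $\sum_i\frac{q_i}{p_i}\ell(y_i)$. Because the importance weights $q_i/p_i$ are nonnegative (and vanish whenever $q_i=0$, killing both sides), it suffices to establish the termwise upper bound $g_i^2/h_i\le\ell(y_i)$ for every sampled index and then sum.

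Next I would collapse this termwise bound to a single scalar inequality using the logistic closed forms already derived in the proof of Lemma~\ref{lem:bound_first_1}. Writing $\phi_i=|r_i-\psi_i|$ for the label $r_i\in\{0,1\}$, those identities give $g_i^2/h_i=\phi_i/(1-\phi_i)$, and a short case check on $r_i$ shows the loss unifies to $\ell(y_i)=-\ln(1-\phi_i)$ for both label values. The lemma therefore reduces to proving $\frac{\phi}{1-\phi}\le-\ln(1-\phi)$ on the realized range of $\phi$.

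To close this I would combine two elementary facts: the Maclaurin estimate $-\ln(1-\phi)\ge\phi$ on $[0,1)$, which lower-bounds the loss, and the clipping bound $\frac{1}{1-\phi}\le\frac1\rho$ already invoked in the proof of Lemma~\ref{lem:bound_first_1}, which tames the singular factor. Chaining them gives $\frac{\phi}{1-\phi}\le\frac{\phi}{\rho}\le\frac1\rho\bigl(-\ln(1-\phi)\bigr)$, so in the clipped regime the termwise bound holds and summing against the weights $q_i/p_i$ yields the claim. The main obstacle is precisely this scalar step: the factor $\frac{1}{1-\phi}$ blows up for confident-but-wrong predictions ($\phi\to1$) faster than $-\ln(1-\phi)$ grows, so the raw inequality $\frac{\phi}{1-\phi}\le-\ln(1-\phi)$ fails and the clipping regularizer is essential; the delicate point is verifying that the clipping threshold is normalized so that the stated unit coefficient (rather than a spurious $1/\rho$) survives.
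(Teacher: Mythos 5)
Your termwise reduction is correct, and so are the logistic identities $g_i^2/h_i=\phi_i/(1-\phi_i)$ and $\ell(y_i)=-\ln(1-\phi_i)$; but at that point the proposal hits a wall that is not a technicality. Substituting $v=1/(1-\phi)\ge1$, the scalar comparison you are trying to prove reads $v-1\le\ln v$, which is the classical inequality $\ln v\le v-1$ written backwards: it fails \emph{strictly for every} $\phi\in(0,1)$, not merely in the confident-but-wrong regime $\phi\to1$ as you suggest. Consequently no clipping can recover the unit coefficient, and your patched chain only yields $g_i^2/h_i\le\frac{1}{\rho}\ell(y_i)$, which you correctly flag as not being the stated bound. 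The underlying explanation is that the lemma's ``$\le$'' is a sign error in the paper: the paper's own proof invokes \citep{Sun2014}'s Theorem~4, which asserts $g_i^2/h_i\ge\ell(y_i)$ (exactly $\ln v\le v-1$), multiplies by $q_i/p_i\ge0$, and concludes
$$\bm g\tran\bm Q(\bm H\bm Q)^{-1}\bm Q\bm g=\sum_i\frac{q_ig_i^2}{p_ih_i}\ge\sum_i\frac{q_i}{p_i}\ell(y_i),$$
i.e.\ the \emph{reverse} of the displayed statement.

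The reverse direction is also the only one usable downstream. In the proof of Theorem~\ref{the:first1}, the quadratic form $\bm g\tran\bm Q(\bm H\bm Q)^{-1}\bm Q\bm g$ appears multiplied by the negative coefficient $\frac{\mu\nu^2}{2}-\nu$, so passing from
$$L(\bm y+\bm f)\le L(\bm y)+\gamma^2\left(\frac{\mu\nu^2}{2}-\nu\right)\bm g\tran\bm Q(\bm H\bm Q)^{-1}\bm Q\bm g$$
to $L(\bm y+\bm f)\le\left[1-\gamma^2\left(\nu-\frac{\mu\nu^2}{2}\right)\right]L(\bm y)$ requires a \emph{lower} bound on the quadratic form by $\sum_i\frac{q_i}{p_i}\ell(y_i)$, not an upper bound; so even granting your weakened $\frac{1}{\rho}$ inequality, it could not play the lemma's role in the theorem. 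The repair is simple: prove the ``$\ge$'' version, which by your own identities amounts to $\frac{\phi}{1-\phi}\ge-\ln(1-\phi)$, i.e.\ $v-1\ge\ln v$ --- elementary, global on $\phi\in[0,1)$, and needing no clipping at all.
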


\begin{proof}
  \citep{Sun2014}'s theorem 4 proved $\frac{g_i^2}{h_i}\ge\ell(y_i)$, which gives
  $$\frac{\left(\frac{q_ig_i}{p_i}\right)^2}{\frac{q_ih_i}{p_i}}=\frac{q_ig_i^2}{p_ih_i}\ge\frac{q_i}{p_i}\ell(y_i),$$
  completing this proof.
\end{proof}

Finally, we are now ready to prove theorem~\ref{the:first1}.

\begin{proof}
  Applying Taylor's theorem on $L(\cdot)$ between $\bm y$ and $\bm y+\bm f$ yields
  \begin{align*}
    &L(\bm y+\bm f)\\
    =&L(\bm y)+\nabla L(\bm y)\tran\bm f+\frac{1}{2}\bm f\tran\nabla^2L(\bm\xi)\bm f\\
    =&L(\bm y)+\bm g\tran\bm Q\bm f+\frac{1}{2}\bm f\tran\bm H_{\bm\xi}\bm Q\bm f,
  \end{align*}
  where $\bm\xi$ is a vector between $\bm y$ and $\bm y+\bm f$. Replacing $\bm f$ with the given Newton step yields
  \begin{align*}
    L(\bm y+\bm f)=L(\bm y)-\nu\bm g\tran\bm Q\bm V(\bm V\tran\bm H\bm Q\bm V)^{-1}\bm V\tran\bm Q\bm g\\
    +\frac{\nu^2}{2}\bm g\tran\bm Q\bm V(\bm V\tran\bm H\bm Q\bm V)^{-1}(\bm V\tran\bm H_{\bm\xi}\bm Q\bm V)\\
    (\bm V\tran\bm H\bm Q\bm V)^{-1}\bm V\tran\bm Q\bm g.
  \end{align*}
  Applying lemma~\ref{lem:bound_first_0} yields
  \begin{align*}
    L(\bm y+\bm f)\le L(\bm y)\\
    +\left(\frac{\mu\nu^2}{2}-\nu\right)\bm g\tran\bm Q\bm V(\bm V\tran\bm H\bm Q\bm V)^{-1}\bm V\tran\bm Q\bm g.
  \end{align*}
  Applying lemma~\ref{lem:bound_first_1} yields
  $$L(\bm y+\bm f)\le L(\bm y)+\gamma^2\left(\frac{\mu\nu^2}{2}-\nu\right)\bm g\tran\bm Q(\bm H\bm Q)^{-1}\bm Q\bm g.$$
  Applying lemma~\ref{lem:bound_first_2} yields
  $$L(\bm y+\bm f)\le\left[1-\gamma^2\left(\nu-\frac{\mu\nu^2}{2}\right)\right]L(\bm y).$$
  By setting a proper shrinkage $\nu$ such that $0<\gamma^2\left(\nu-\frac{\mu\nu^2}{2}\right)<1$, a linear convergence rate can be established.
\end{proof}

\bibliography{main}

\end{document}